\newtheorem{theorem}{Theorem}
\newtheorem*{remark}{Remark}
\title{Conditional Conformal Risk Adaptation}
\author{
 Rui Luo \\
  Department of Systems Engineering\\
  City University of Hong Kong\\
  \texttt{ruiluo@cityu.edu.hk} \\
  \And
 Zhixin Zhou \\
  Alpha Benito Research \\
  \texttt{zzhou@alphabenito.com} \\
}
\date{}
\begin{document}

\maketitle

\begin{abstract}
Uncertainty quantification is becoming increasingly important in image segmentation, especially for high-stakes applications like medical imaging. While conformal risk control generalizes conformal prediction beyond standard miscoverage to handle various loss functions such as false negative rate, its application to segmentation often yields inadequate conditional risk control—some images experience very high false negative rates while others have negligibly small ones. We develop Conformal Risk Adaptation (CRA), which introduces a new score function for creating adaptive prediction sets that significantly improve conditional risk control for segmentation tasks. We establish a novel theoretical framework that demonstrates a fundamental connection between conformal risk control and conformal prediction through a weighted quantile approach, applicable to any score function. To address the challenge of poorly calibrated probabilities in segmentation models, we introduce a specialized probability calibration framework that enhances the reliability of pixel-wise inclusion estimates. Using these calibrated probabilities, we propose Calibrated Conformal Risk Adaptation (CCRA) and a stratified variant (CCRA-S) that partitions images based on their characteristics and applies group-specific thresholds to further enhance conditional risk control. Our experiments on polyp segmentation demonstrate that all three methods—CRA, CCRA, and CCRA-S—provide valid marginal risk control and deliver more consistent conditional risk control across diverse images compared to standard approaches, offering a principled approach to uncertainty quantification that is particularly valuable for high-stakes and personalized segmentation applications.
\end{abstract}

\section{Introduction}

Image segmentation is a fundamental computer vision task with critical applications in medical diagnostics, autonomous driving, and remote sensing. While deep learning has significantly advanced segmentation performance, reliable uncertainty quantification remains challenging but essential for safety-critical applications. Traditional evaluation metrics like Dice or IoU provide overall performance measures but fail to offer instance-wise reliability guarantees.

Conformal prediction (CP) has emerged as a powerful framework for providing distribution-free uncertainty quantification with finite-sample guarantees. It constructs prediction regions that contain the true label with a user-specified probability, regardless of the underlying data distribution. Recent work on conformal risk control (CRC) \cite{angelopoulos2024conformal} has extended this framework to handle more complex performance metrics beyond simple miscoverage, including controlling the false negative rate in segmentation tasks.

However, image segmentation presents distinct challenges when applying CRC:

\begin{enumerate}
\item \textbf{Weak conditional risk control}: While CRC guarantees marginal risk control across the dataset, the conditional risk can vary substantially between individual images. Existing CRC approaches cannot adequately address this image-specific variability.
\item \textbf{Dependence on accurate probability estimates}: CRC's effectiveness relies heavily on well-calibrated prediction probabilities. Image segmentation models often produce poorly calibrated confidence estimates, resulting in suboptimal prediction sets that fail to satisfy the intended risk constraints.
\end{enumerate}

In this paper, we first establish a fundamental connection between conformal risk control and conformal prediction through a weighted quantile approach. This connection not only provides theoretical insights but also leads to improved conditional guarantees. Building on this foundation, we make the following contributions:

\begin{enumerate}
    \item We propose a weighted quantile formulation that enables better conditional risk control across different image characteristics. This approach can be applied to any general score functions for risk control while maintaining the theoretical guarantees of conformal prediction.
    
    \item We develop Conformal Risk Adaptation (CRA), a novel segmentation score function derived from adaptive prediction sets in conformal classification. This score function allows prediction sets to adapt to the underlying confidence distribution of each image, further enhancing conditional risk control.
    
    \item To further reduce the gap between the conditional risk and the desired control level, we introduce a comprehensive calibration framework that combines probability calibration with stratified calibration strategies inspired by group conditional conformal prediction. This integrated approach improves the reliability of pixel-wise inclusion probabilities and reduces absolute error in coverage across varying image characteristics, resulting in more consistent conditional risk control.
\end{enumerate}

Our comprehensive experiments on both medical and natural image segmentation tasks demonstrate that our method provides valid risk control with significantly improved conditional guarantees compared to existing approaches.

\section{Related Work}

\subsection{Conformal Risk Control}

Conformal prediction, introduced by Vovk $et$ $al$.~\cite{vovk2005algorithmic}, provides distribution-free uncertainty quantification with finite-sample guarantees. The split conformal prediction variant \cite{lei2018distribution} has gained popularity due to its computational efficiency. This methodology has been widely applied in classification \cite{luo2024trustworthy}, regression tasks \cite{luo2024conformal}, and can be adapted to diverse real-world scenarios, such as games \cite{luo2024game}.

Recent work has extended conformal prediction to control a range of risk metrics beyond miscoverage \cite{angelopoulos2024conformal, bates2021distribution}. Angelopoulos $et$ $al$.~\cite{angelopoulos2024conformal} introduced Conformal Risk Control (CRC), which guarantees that the expected value of any monotone loss function (e.g., false negative rates in segmentation) is bounded by a user-specified tolerance. Additionally, Luo $et$ $al$. \cite{luo2024entropy} introduced entropy-based techniques to enhance the reliability of conformal methods.

To prevent the average interval length of the consistency set from inflating due to the use of identical scalars, Teneggi $et$ $al$. \cite{teneggi2023trust} proposed assigning each pixel to one of $K$ groups that share some statistical data and ensuring risk control through a convex surrogate loss. However, this approach may fail in the presence of pixel-level semantic inconsistencies, such as in CT imaging with significant anatomical variations. To address this, Teneggi $et$ $al$.  \cite{teneggi2025conformal} introduced sem-CRC, which leverages state-of-the-art segmentation models (U-Net \cite{ronneberger2015u}, nnU-Net \cite{isensee2021nnu}) and calibrates uncertainty intervals separately for each semantic group, thereby achieving more stringent clinically relevant risk control. For structured data applications, Luo $et$ $al$. \cite{luo2023anomalous} developed anomaly detection methods for graph-based scenarios, extending conformal false positive rate guarantees to network data.

In addition, Bereska $et$ $al$. \cite{bereska2025sacp} introduced Spatially-Aware Conformal Prediction (SACP), which explicitly takes into account the distance from critical structures by incorporating a locally adaptive inconsistency score, thereby achieving more conservative uncertainty estimation near key anatomical interfaces (such as tumor-vessel boundaries). Argaw $et$ $al$. \cite{argaw2022identifying} investigated heterogeneous effects in randomized clinical trials and employed joint confidence intervals derived from conformal quantile regression for prediction. He $et$ $al$. \cite{he2025training} proposed a training-aware conformal risk control method that integrates conformal risk control with conformal training, incorporating domain-specific decision thresholds and clinical risk functions into the uncertainty estimation framework. By focusing on both accuracy and uncertainty quantification during the training process, their method achieves high sensitivity and specificity while reducing the workload of the quality assurance process. Similarly, Luo $et$ $al$. \cite{luo2025volume} advanced conformal methods for multi-output regression settings.

Together, these advances move from global, fixed-group strategies to adaptive methods that align uncertainty quantification with underlying image semantics and spatial context, making conformal risk control increasingly practical for real-world clinical applications and other domains such as graph-based applications \cite{wang2025enhancing}.

\subsection{Conditional Conformal Prediction}

Although conformal methods guarantee marginal coverage, in high-stakes decision-making areas such as healthcare \cite{vazquez2022conformal}, radiation therapy \cite{he2025training}, and drug affinity \cite{gibbs2025conformal}, marginal guarantees are insufficient--significant disparities in coverage across relevant subpopulations may persist. An effective remedy is to target conditional coverage. However, without distributional assumptions, achieving conditional coverage--that is, ensuring prediction sets attain the desired coverage for every possible covariate value--is provably impossible \cite{vovk2012conditional, foygel2021limits}.

It is often necessary to relax the stringent requirement of conditional validity in favor of focusing on marginal information or multi-accuracy objectives \cite{andrews2013inference, hebert2018multicalibration, kim2019multiaccuracy, deng2023happymap, kiyani2024conformal}. Some works have explored modifying the calibration step \cite{lei2014distribution, guan2023localized, barber2023conformal}, while others have altered the initial prediction rule in order to more accurately capture the conditional distribution of $Y|X$ \cite{romano2019conformalized, sesia2021conformal, chernozhukov2021distributional, luo2024weighted}.

Some researchers have also investigated coverage under covariate shift \cite{lei2014distribution, tibshirani2019conformal, izbicki2022cd, guan2023localized, hore2023conformal, cauchois2024robust}. For instance, Gibbs $et$ $al$. \cite{gibbs2025conformal} developed a framework that achieves exact finite-sample coverage for all possible shifts. Their approach involves defining a series of tasks wherein certain classes of functions must be covered, effectively creating an interpolation between marginal and conditional coverage. This framework not only addresses the challenges posed by covariate shifts but also provides a robust solution for ensuring coverage guarantees under varying distributional changes. 

In addition, some studies have explored achieving coverage by imposing group conditional guarantees \cite{toccaceli2019combination, gupta2020distribution, ding2023class, dunn2023distribution, kiyani2024length, luo2024conformalized}. Vovk $et$ $al$. \cite{vovk2003mondrian} introduced Mondrian conformal prediction, which provides exact coverage for disjoint groups but does not accommodate overlapping subgroups. Romano $et$ $al$. \cite{romano2020malice} focused on achieving equitable coverage over disjoint protected groups. Meanwhile, Barber $et$ $al$. \cite{foygel2021limits} proposed a more flexible method to handle overlapping groups; however, this approach is computationally intensive and tends to produce overly conservative prediction intervals. More recently, Jung $et$ $al$. \cite{jung2023batch} introduced a quantile regression method based on a linear combination of subgroup indicator functions to enhance conditional coverage for overlapping groups. Nonetheless, this method relies on distributional assumptions and may still struggle to offer adequate coverage in finite-sample settings.

Additional work has aimed to improve conditional coverage by learning features from the data. For instance, Yuksekgonul $et$ $al$. \cite{yuksekgonul2023beyond} proposed a density-based atypicality concept to enhance calibration and conditional coverage with respect to input atypicality. Kiyani $et$ $al$. \cite{kiyani2024conformal} investigated learning partitions of the covariate space, such that points within the same partition are similar in terms of their prediction sets, with the goal of improving conditional validity. Kaur $et$ $al$. \cite{kaur2025conformal} examined general patterns of miscoverage in standard conformal prediction and developed a two-dimensional statistic that consistently demonstrates its effectiveness in settings that extend beyond well-defined groups.
Moreover, Aabesh $et$ $al$. \cite{bhattacharyya2024group} combined Weighted Conformal Prediction with Group Weighting to ensure predictive coverage amid covariate shift, while Prinzhorn $et$ $al$. \cite{prinzhorn2024conformal} introduced a conformal method integrating time series decomposition with component-specific exchangeability for reliable uncertainty estimates in forecasting. For neural network applications on graph data, recent works \cite{luo2025conformal, tang2025enhanced} have developed specialized techniques that enhance reliability and coverage guarantees.

Our method builds upon and improves these approaches by offering tighter finite-sample risk control guarantees without distributional assumptions, while allowing for overlapping groups and more flexible function classes.

\section{Preliminaries}

\subsection{Problem Setup}

We consider the image segmentation task where each input image $X_i$ is associated with a ground truth segmentation mask $Y_i \subset \{1, 2, \ldots, N_i\}$, where $N_i$ represents the total number of pixels in image $X_i$. Each pixel is indexed by $j \in \{1, 2, \ldots, N_i\}$. 
Our goal is to construct a prediction set $\hat{C}(X_i) \subset \{1, 2, \ldots, N_i\}$ that controls the false negative rate in expectation, ensuring: 
\begin{equation}
\mathbb{E}\left[1-\frac{\left|\hat{C}(X_i) \cap Y_i \right|}{\left|Y_i\right|}\right] \leq \alpha,
\label{eq:coverage_goal}
\end{equation}
where $\alpha \in (0,1)$ is a user-specified risk level. The expectation in \eqref{eq:coverage_goal} reflects the average performance of the prediction set $\hat{C}(X_i)$ over random draws of the test data. This metric represents the proportion of true positive pixels that are incorrectly excluded in the prediction set, which is particularly important in applications like medical imaging where missing regions of interest can have serious consequences.

\subsection{Conformal Risk Control}\label{subsec:crc_intro}

Conformal risk control (CRC) \cite{angelopoulos2024conformal} extends conformal prediction to control the expected value of any monotone loss function. For segmentation, CRC controls false negative rate by  applying a threshold to the pixel-wise probabilities produced by a base model. The optimal threshold value is determined through a calibration procedure on a held-out dataset.

Given a prediction model $\hat{p}(X_i) = (\hat{p}_1(X_i), \ldots, \hat{p}_{N_i}(X_i))$, where $\hat{p}_j(X_i)$ estimates $\mathbb{P}(j \in Y_i | X_i)$ for pixel $j$, the CRC approach:

\begin{enumerate}
    \item Defines prediction sets $\hat{C}(X_i, \tau) = \{j : \hat{p}_j(X_i) \geq \tau\}$ for a threshold $\tau$.
    
    \item Computes the calibrated threshold:
    \begin{equation}
    \tau' = \inf \left\{\tau: \frac{1}{n+1} \sum_{i\in\mathcal{I}_\text{cal}} \left(1 - \frac{|\hat{C}(X_i,\tau) \cap Y_i|}{|Y_i|}\right) + \frac{B}{n+1} \leq \alpha\right\}
    \end{equation}
    where $n$ is the size of the calibration set $\mathcal{I}_\text{cal}$, $B$ is the upper bound of the loss function, and $\alpha$ is the desired risk level. 
    
    \item Returns the final prediction set $\hat{C}(X_i, \tau')$.
\end{enumerate}

This approach guarantees that $\mathbb{E}[1 - |\hat{C}(X_i, \tau') \cap Y_i|/|Y_i|] \leq \alpha$ over the data distribution, providing a distribution-free control of the false negative rate, as established by Theorem 1 in Angelopoulos $et$ $al$. \cite{angelopoulos2024conformal}.

\section{Methodology}

\subsection{Conformal Risk Adaptation (CRA)}
\label{subsec:cra}

We introduce Conformal Risk Adaptation (CRA), which shares merits with adaptive prediction sets (APS) \cite{romano2020classification} for conformal classification that adapts the prediction set to the underlying probability distribution of the segmentation model while providing statistical guarantees for risk control.

For each image $X_i$, we define an adaptive prediction set:
\begin{equation}
\label{eq:adaptive_set}
\hat{C}(X_i, \alpha') = \arg\min_{C \subset \{1,2,...,N_i\}} \{|C| : \sum_{j \in C} \hat{p}_j(X_i) \geq (1-\alpha') \sum_{j=1}^{N_i} \hat{p}_j(X_i)\},
\end{equation}
which selects the smallest set of pixels that captures at least $(1-\alpha')$ of the total predicted probability mass. Similar to APS, our approach accounts for the relative ranking of pixel probabilities rather than applying uniform thresholds across all images, allowing for more adaptive and powerful prediction regions.

The intuition behind this approach relates to the expected coverage, where $\mathbb{E}[|C\cap Y_i|]=\sum_{j\in C}p_j(X_i)$ and $\mathbb{E}[|Y_i|]=\sum_{j\in[N_i]}p_j(X_i)$. By controlling the ratio of these expected values, we can effectively control the false negative rate. By controlling the ratio of these expected values, we can effectively control the false negative rate.

For efficient computation, we define a conformity score that captures each pixel's position in this adaptive ranking:
\begin{equation}
\label{eq:cra_score}
s(X_i, j) = \frac{\sum_{j'=1}^{N_i} \hat{p}_{j'}(X_i) \mathbbm{1}\{\hat{p}_{j'}(X_i) \leq \hat{p}_j(X_i)\}}{\sum_{j'=1}^{N_i} \hat{p}_{j'}(X_i)}.
\end{equation}
This score increases with the pixel's predicted probability, such that pixels with higher scores are more likely to be included in the prediction set.

\begin{theorem}
\label{thm:adaptive_set_equivalence}
The adaptive prediction set $\hat{C}(X_i, \alpha')$ is equivalent to the set of pixels with conformity scores above a threshold: $\hat{C}(X_i, \alpha') = \{j : s(X_i, j) \geq 1-\alpha'\}$.
\end{theorem}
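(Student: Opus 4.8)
The plan is to establish the equivalence by sorting the pixel probabilities and showing that both sides of the claimed identity select exactly the same prefix of this sorted list. First I would relabel the pixels so that $\hat{p}_{(1)}(X_i) \le \hat{p}_{(2)}(X_i) \le \cdots \le \hat{p}_{(N_i)}(X_i)$, writing $P = \sum_{j=1}^{N_i} \hat{p}_j(X_i)$ for the total mass. In this ordering, the conformity score $s(X_i, j)$ in \eqref{eq:cra_score} for the pixel at sorted position $k$ becomes (essentially) the cumulative sum $\frac{1}{P}\sum_{\ell=1}^{k} \hat{p}_{(\ell)}(X_i)$, so $s$ is nondecreasing along the sorted order. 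Consequently, the superlevel set $\{j : s(X_i,j) \ge 1-\alpha'\}$ is exactly a suffix of the sorted list, i.e.\ the set of pixels with the largest probabilities — say positions $k^\star, k^\star+1, \dots, N_i$ where $k^\star$ is the smallest index whose cumulative score reaches $1-\alpha'$.

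Next I would analyze the optimization problem \eqref{eq:adaptive_set}. Since the objective is $|C|$ and the only constraint is $\sum_{j \in C}\hat{p}_j(X_i) \ge (1-\alpha')P$, a standard greedy/exchange argument shows that an optimal $C$ of any fixed cardinality $m$ must consist of the $m$ pixels with the largest probabilities: replacing any pixel in $C$ by an outside pixel of higher probability cannot decrease the captured mass, so among all sets of size $m$ the top-$m$ set maximizes $\sum_{j\in C}\hat{p}_j$. Therefore the minimal feasible cardinality is the smallest $m$ such that $\sum_{\ell=N_i-m+1}^{N_i}\hat{p}_{(\ell)}(X_i) \ge (1-\alpha')P$, and $\hat{C}(X_i,\alpha')$ is precisely that top-$m$ set. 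Translating the two characterizations into the same indexing, "smallest suffix whose mass is $\ge (1-\alpha')P$" and "smallest suffix whose cumulative score is $\ge 1-\alpha'$" are the same condition after dividing by $P$, so the two sets coincide.

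The main obstacle, and the place requiring care rather than deep ideas, is the handling of ties in the probabilities: when several pixels share the same value $\hat{p}_j(X_i)$, the sorted order is not unique, the indicator $\mathbbm{1}\{\hat{p}_{j'} \le \hat{p}_j\}$ in \eqref{eq:cra_score} lumps all tied pixels together (so tied pixels receive the same score $s$), and the $\arg\min$ in \eqref{eq:adaptive_set} may have several optimizers. I would address this by noting that the score assigns a common value to an entire tie-block, so the superlevel set $\{j : s(X_i,j)\ge 1-\alpha'\}$ either contains all or none of a tie-block; correspondingly, I would interpret $\hat{C}(X_i,\alpha')$ as the canonical minimizer that, whenever it includes part of a tie-block, includes all of it (equivalently, one breaks ties consistently in the sorting). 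Under this convention the exchange argument goes through verbatim and the two sets are literally equal; I would state this tie-breaking convention explicitly at the start of the proof. The remaining steps — monotonicity of $s$ along the sorted order and the prefix/suffix matching of the two thresholds — are then routine bookkeeping.
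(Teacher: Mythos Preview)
Your approach is essentially the paper's: both arguments sort the probabilities, identify the superlevel set $\{j:s(X_i,j)\ge 1-\alpha'\}$ as a suffix of the sorted list, and match it to the minimal-cardinality feasible set in \eqref{eq:adaptive_set}. You are in fact more careful than the paper, which simply asserts that the superlevel set ``is the smallest such set (as pixels are added in descending order of probability)'' without spelling out your exchange argument and without mentioning ties at all.

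One caution on your tie-handling, though: the convention ``whenever the minimizer includes part of a tie-block, include all of it'' does not by itself force equality with the superlevel set, and your parenthetical ``equivalently, one breaks ties consistently in the sorting'' is not actually equivalent to it. Take $\hat p=(0.4,\,0.3,\,0.3)$ with $1-\alpha'=0.6$: the scores are $(1.0,\,0.6,\,0.6)$, so the superlevel set is $\{1,2,3\}$; but $\{2,3\}$ is a full tie-block, is feasible (mass exactly $0.6$), and has cardinality $2<3$, so under your stated convention it is a legitimate minimizer that differs from the superlevel set. The exchange argument only shows that the top-$m$ set is \emph{a} feasible set of minimal cardinality, not that every minimizer is top-$m$. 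The clean fix is either to assume the $\hat p_j(X_i)$ are pairwise distinct, or to restrict the $\arg\min$ in \eqref{eq:adaptive_set} to nested sets of the form $\{j:\hat p_j(X_i)\ge t\}$---which is what the paper's proof implicitly does and which makes the equivalence immediate.
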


\begin{proof}
Let's define $S(X_i, 1-\alpha') = \{j : s(X_i, j) \geq 1-\alpha'\}$. We need to show that $\hat{C}(X_i, \alpha') = S(X_i, 1-\alpha')$.

The conformity score $s(X_i, j)$ in Equation~\eqref{eq:cra_score} represents the normalized cumulative probability mass of all pixels with probability less than or equal to $\hat{p}_j(X_i)$. A score of $s(X_i, j) \geq 1-\alpha'$ means that pixel $j$ is among the highest probability pixels whose cumulative probability mass accounts for at least $(1-\alpha')$ of the total probability mass.

Let's examine what the condition $s(X_i, j) \geq 1-\alpha'$ means in terms of probabilities:
\begin{equation*}
\frac{\sum_{j'=1}^{N_i} \hat{p}_{j'}(X_i) \mathbbm{1}\{\hat{p}_{j'}(X_i) \leq \hat{p}_j(X_i)\}}{\sum_{j'=1}^{N_i} \hat{p}_{j'}(X_i)} \geq 1-\alpha'.
\end{equation*}

If we arrange all pixels in order of increasing probability and include pixel $j$ and all pixels with higher probabilities in our set $C$, then:
\begin{equation*}
\frac{\sum_{j' \in C} \hat{p}_{j'}(X_i)}{\sum_{j'=1}^{N_i} \hat{p}_{j'}(X_i)} \geq 1-\alpha',
\end{equation*}
which is equivalent to:
\begin{equation*}
\sum_{j' \in C} \hat{p}_{j'}(X_i) \geq (1-\alpha')\sum_{j'=1}^{N_i} \hat{p}_{j'}(X_i).
\end{equation*}

This is precisely the definition of $\hat{C}(X_i, \alpha')$ in Equation~\eqref{eq:adaptive_set}. By construction, the set $S(X_i, 1-\alpha')$ contains exactly those pixels that satisfy the above inequality and is the smallest such set (as pixels are added in descending order of probability). Therefore, $\hat{C}(X_i, \alpha') = S(X_i, 1-\alpha')$.
\end{proof}

\subsection{Weighted Quantile Formulation for Risk Control}
\label{subsec:weighted_quantile}

For segmentation tasks, we can formulate risk control as a weighted quantile problem, providing both theoretical insights and computational efficiency benefits.

For each pixel $j$ in image $X_i$, we define a conformity score that measures how likely a pixel is to be included in the prediction set:
\begin{equation}
\label{eq:conformity_score}
s(X_i, j) = \hat{p}_j(X_i),
\end{equation}
where $\hat{p}_j(X_i)$ estimates the probability that pixel $j$ belongs to the target class.

With this definition, the calibrated threshold $\tau'$ for controlling the false negative rate can be computed as:
\begin{equation}
\label{eq:calibrated_threshold}
\tau' = \inf \left\{\tau: \frac{n}{n+1}\sum_{i \in \mathcal{I}_\text{cal}} \sum_{j \in Y_i} \frac{\mathbbm{1}\{s(X_i, j) < \tau\}}{|Y_i|} + \frac{B}{n+1} \leq \alpha\right\},
\end{equation}
where $n$ is the size of the calibration set, $B$ is the upper bound of the loss function, and $\alpha$ is the desired risk level.

This formulation can be interpreted as finding the $[(n+1)\alpha-B]/n$-th quantile of the weighted distribution of scores $\{(s(X_i,j), 1/|Y_i|): i\in \mathcal{I}_\text{cal}, j\in Y_i\}$, where $1/|Y_i|$ serves as the weight for each score $s(X_i,j)$. This weighted quantile formulation applies to any valid score function, including the proposed CRA as defined in Equation~\eqref{eq:conformity_score}.

\begin{theorem}
\label{thm:weighted_quantile}
For any score function $s(X_i, j)$ that ranks pixels based on their likelihood of inclusion, the optimal threshold $\tau'$ that controls the expected false negative rate at level $\alpha$ can be computed as the solution to the weighted quantile formulation in Equation~\eqref{eq:calibrated_threshold}.
\end{theorem}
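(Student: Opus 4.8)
The plan is to reduce the statement to Theorem~1 of Angelopoulos \emph{et al.}~\cite{angelopoulos2024conformal} by recognizing the false negative loss as an average of threshold indicators. Fix any real-valued score function $s$ and define the prediction sets by $\hat C(X_i,\tau)=\{j:s(X_i,j)\ge\tau\}$. This family is automatically nested (decreasing in $\tau$) for any $s$, so the requirement that $s$ ``rank pixels by likelihood of inclusion'' bears only on the statistical efficiency of the resulting sets, not on the validity argument below. The key observation is the identity
\[
1-\frac{|\hat C(X_i,\tau)\cap Y_i|}{|Y_i|}=\frac{1}{|Y_i|}\sum_{j\in Y_i}\bigl(1-\mathbbm{1}\{s(X_i,j)\ge\tau\}\bigr)=\frac{1}{|Y_i|}\sum_{j\in Y_i}\mathbbm{1}\{s(X_i,j)<\tau\}=:L_i(\tau),
\]
which turns the calibration constraint in \eqref{eq:calibrated_threshold} into an empirical-risk bound of the form $\frac{1}{n+1}\sum_{i\in\mathcal I_\text{cal}}L_i(\tau)+\frac{B}{n+1}\le\alpha$ for the loss family $L_i$, matching the conformal-risk-control template of \cite{angelopoulos2024conformal}.

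Next I would verify that $L_i$ satisfies the hypotheses of conformal risk control. Each $L_i(\tau)$ lies in $[0,1]$, so $B=1$ is an admissible upper bound; and $L_i(\tau)$ is non-decreasing in $\tau$ because $\hat C(X_i,\tau)$ shrinks as $\tau$ grows. Substituting $\lambda=-\tau$ makes $L_i$ non-increasing and right-continuous in $\lambda$, matching the convention of \cite{angelopoulos2024conformal} and turning the infimum in \eqref{eq:calibrated_threshold} into the CRC stopping rule $\hat\lambda=\inf\{\lambda:\tfrac{n}{n+1}\hat R_n(\lambda)+\tfrac{B}{n+1}\le\alpha\}$ with $\hat R_n(\lambda)=\tfrac1n\sum_{i\in\mathcal I_\text{cal}}L_i(\lambda)$. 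Exchangeability is inherited from the data because $s(X_i,\cdot)$ is a deterministic function of $X_i$ through the pre-trained model $\hat p$, so the pairs $(s(X_i,\cdot),Y_i)_{i\in\mathcal I_\text{cal}\cup\{n+1\}}$ are exchangeable whenever the images are. Theorem~1 of \cite{angelopoulos2024conformal} then gives $\mathbb E[L_{n+1}(\tau')]\le\alpha$, i.e.\ $\mathbb E\bigl[1-|\hat C(X_{n+1},\tau')\cap Y_{n+1}|/|Y_{n+1}|\bigr]\le\alpha$, which is precisely the false negative rate guarantee claimed for the $\tau'$ of \eqref{eq:calibrated_threshold}.

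For the weighted-quantile reading, I would observe that $\sum_{i\in\mathcal I_\text{cal}}\sum_{j\in Y_i}\tfrac{1}{|Y_i|}\mathbbm{1}\{s(X_i,j)<\tau\}$ equals $n$ times the value at $\tau$ of the empirical cumulative distribution function of the weighted sample $\{(s(X_i,j),1/|Y_i|):i\in\mathcal I_\text{cal},\,j\in Y_i\}$, since each image $X_i$ contributes total weight $\sum_{j\in Y_i}1/|Y_i|=1$ and hence the weights sum to $n$. Clearing the remaining constants, the calibration constraint reduces to $\widehat F_w(\tau)\le\beta$ with $\beta=[(n+1)\alpha-B]/n$, so the extreme $\tau'$ solving it is exactly the weighted $\beta$-quantile of the score distribution. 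Together with the previous paragraph, this identifies the weighted-quantile computation in \eqref{eq:calibrated_threshold} with a threshold that controls the expected false negative rate at level $\alpha$.

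I expect the main obstacle to be bookkeeping rather than anything conceptual: keeping the monotonicity direction, the one-sided continuity convention, and the $\inf$/$\sup$ orientation consistent through the reparametrization $\lambda=-\tau$, and tracking the normalizers ($n$ versus $n+1$ and the role of $B$) so that the quantile level emerges exactly as $[(n+1)\alpha-B]/n$. Once the indicator-sum identity of the first paragraph is in place, no genuinely new idea is required.
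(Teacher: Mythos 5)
Your proposal is correct and follows essentially the same route as the paper's own proof: rewrite the false negative loss as $\frac{1}{|Y_i|}\sum_{j\in Y_i}\mathbbm{1}\{s(X_i,j)<\tau\}$, invoke Theorem~1 of Angelopoulos \emph{et al.} to get $\mathbb{E}[\mathrm{FNR}]\le\alpha$, and then read the calibration constraint as a weighted quantile at level $[(n+1)\alpha-B]/n$. If anything, your write-up is more careful than the paper's, since you explicitly check the monotonicity, boundedness, and exchangeability hypotheses of the CRC theorem (via the reparametrization $\lambda=-\tau$) and note that the total weight of the sample is $n$, steps the paper leaves implicit.
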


\begin{proof}
Let's define the false negative rate for a new test point $(X_{n+1}, Y_{n+1})$ as:
\begin{equation*}
\text{FNR}(X_{n+1}, Y_{n+1}, \tau) = 1 - \frac{|\hat{C}_\tau(X_{n+1}) \cap Y_{n+1}|}{|Y_{n+1}|},
\end{equation*}
where $\hat{C}_\tau(X_{n+1}) = \{j : s(X_{n+1}, j) \geq \tau\}$ is the prediction set with threshold $\tau$.

Our goal is to find a threshold $\tau'$ such that $\mathbb{E}[\text{FNR}(X_{n+1}, Y_{n+1}, \tau')] \leq \alpha$.

Following the conformal risk control framework \cite{angelopoulos2024conformal}, if we have calibration data $(X_i, Y_i)_{i=1}^n$ and use the risk estimator:
\begin{equation*}
\hat{R}(\tau) = \frac{1}{n}\sum_{i=1}^n \text{FNR}(X_i, Y_i, \tau) = \frac{1}{n}\sum_{i=1}^n \left(1 - \frac{|\hat{C}_\tau(X_i) \cap Y_i|}{|Y_i|}\right),
\end{equation*}
then choosing $\tau'$ as:
\begin{equation*}
\tau' = \inf\left\{\tau : \frac{n}{n+1}\hat{R}(\tau) + \frac{B}{n+1} \leq \alpha\right\},
\end{equation*}
guarantees that $\mathbb{E}[\text{FNR}(X_{n+1}, Y_{n+1}, \tau')] \leq \alpha$.

Expanding the definition of $\hat{R}(\tau)$:
\begin{align*}
\tau' &= \inf\left\{\tau : \frac{n}{n+1}\frac{1}{n}\sum_{i=1}^n \left(1 - \frac{|\hat{C}_\tau(X_i) \cap Y_i|}{|Y_i|}\right) + \frac{B}{n+1} \leq \alpha\right\} \\
&= \inf\left\{\tau : \frac{1}{n+1}\sum_{i=1}^n \left(1 - \frac{|\hat{C}_\tau(X_i) \cap Y_i|}{|Y_i|}\right) + \frac{B}{n+1} \leq \alpha\right\}.
\end{align*}

For our score function $s(X_i, j) = \hat{p}_j(X_i)$, we have:
\begin{align*}
1 - \frac{|\hat{C}_\tau(X_i) \cap Y_i|}{|Y_i|} &= 1 - \frac{|\{j \in Y_i : s(X_i, j) \geq \tau\}|}{|Y_i|} \\
&= \frac{|\{j \in Y_i : s(X_i, j) < \tau\}|}{|Y_i|} \\
&= \sum_{j \in Y_i} \frac{\mathbbm{1}\{s(X_i, j) < \tau\}}{|Y_i|}.
\end{align*}

Substituting this back, we get:
\begin{equation*}
\tau' = \inf\left\{\tau : \frac{1}{n+1}\sum_{i=1}^n \sum_{j \in Y_i} \frac{\mathbbm{1}\{s(X_i, j) < \tau\}}{|Y_i|} + \frac{B}{n+1} \leq \alpha\right\},
\end{equation*}
which is exactly Equation~\eqref{eq:calibrated_threshold}.
\end{proof}

\begin{remark}
Traditional implementations of risk control methods use grid search to find the threshold \cite{angelopoulos2024conformal}, which is computationally inefficient and depends on the grid resolution. The weighted quantile formulation enables efficient algorithms for threshold computation, resulting in more accurate thresholds without the limitations of grid-based search.
\end{remark}

\subsection{Probability Calibration for Accurate Probability Estimation}

While CRC only relies on the relative ordering of pixel probabilities (meaning monotone calibration would not affect its performance), our CRA approach depends on accurate estimation of the total probability mass $\sum_{j=1}^{N_i} \hat{p}_{j}(X_i)$. Therefore, we introduce a probability calibration framework specifically designed for segmentation models.

Given predicted probabilities $\hat{p}_j(X_i)$ from a base segmentation model, we seek a calibration function $f: [0,1] \rightarrow [0,1]$ that satisfies:

\begin{enumerate}
    \item \textbf{Monotonicity}: For any $\hat{p}_a \leq \hat{p}_b$, we require $f(\hat{p}_a) \leq f(\hat{p}_b)$.

    \item \textbf{Probability Matching}: The calibrated probabilities should minimize the empirical cross-entropy loss:
    \begin{equation}
    \mathcal{L}_{\text{emp}}(f) = -\frac{1}{|\mathcal{I}_{\text{val}}|}\sum_{i \in \mathcal{I}_{\text{val}}}\sum_{j=1}^{N_i} \left[y_{ij}\log f(\hat{p}_j(X_i)) + (1-y_{ij})\log(1-f(\hat{p}_j(X_i)))\right],
    \end{equation}
    where $y_{ij} \in \{0,1\}$ indicates whether pixel $j$ in image $X_i$ belongs to the target class.
\end{enumerate}

This calibration function $f$ is applied to all predicted probabilities $\hat{p}_j(X_i)$ across all images and pixels simultaneously. We train this function on a separate validation set $\mathcal{I}_{\text{val}}$, distinct from both the training data used for the base model and the calibration set $\mathcal{I}_{\text{cal}}$ that will later be used for conformal calibration. This separation ensures that the probability calibration step does not interfere with the subsequent conformal calibration process.

\begin{figure}[t]
    \centering
    \begin{minipage}[t]{0.55\textwidth}
        \centering
        \includegraphics[width=\linewidth]{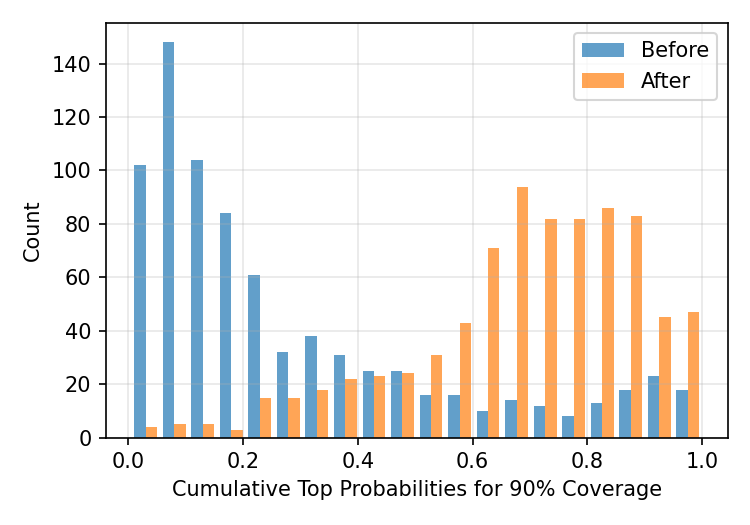}
        \label{fig:prob_calibration}
    \end{minipage}
    \hfill
    \begin{minipage}[t]{0.4\textwidth}
        \centering
        \includegraphics[width=\linewidth]{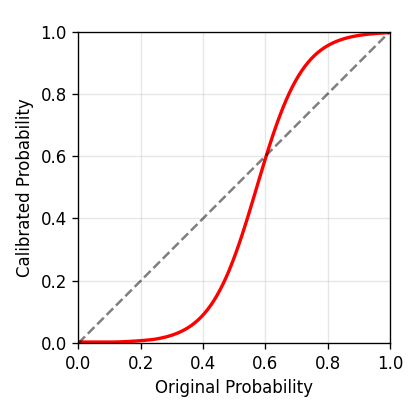}
        \label{fig:calibration_curve}
    \end{minipage}
    \caption{Left: Distribution of probability mass proportion required to achieve $(1-\alpha)$ coverage before and after calibration. For each sample $X_i$, we compute the minimum proportion of total probability mass ($\sum \hat{p}_j(X_i)$) needed to include at least $(1-\alpha)$ of the true positive pixels $Y_i$. Right: Calibration curve showing predicted vs. calibrated probabilities. The red curve shows our isotonic regression calibration function, correcting for overconfidence in mid-range probabilities.}
    \label{fig:combined_calibration}
\end{figure}

\subsection{Stratified Calibration for Improved Conditional Risk Control}

To further reduce the gap between the conditional risk and the desired control level, we propose a stratified calibration approach inspired by group conditional conformal prediction.

We define a partitioning function $g: \mathcal{X} \rightarrow \{1,2,...,K\}$ that assigns each image to one of $K$ strata based on the sum of predicted probabilities across all pixels in the image, $\sum_{j=1}^{N_i} \hat{p}_{j}(X_i)$. This sum serves as a proxy for the model's overall confidence in its segmentation of the image.

For each stratum $k \in \{1,2,...,K\}$, we compute a stratum-specific calibrated threshold:
\begin{equation}
\alpha'_k = \inf \left\{\alpha: \frac{n_k}{n_k+1}\sum_{i \in \mathcal{I}_\text{cal}: g(X_i)=k} \left(1 - \frac{|\hat{C}(X_i, \alpha) \cap Y_i|}{|Y_i|}\right) + \frac{B}{n_k+1} \leq \alpha\right\},
\end{equation}
where $n_k = |\{i \in \mathcal{I}_\text{cal}: g(X_i)=k\}|$ is the number of calibration samples in stratum $k$.

For a new test image $X_{n+1}$, we determine its stratum $k = g(X_{n+1})$ by calculating its total probability mass and apply the corresponding threshold $\alpha'_k$.

The stratified calibration approach provides valid risk control within each stratum:
\begin{equation}
\mathbb{E}[1 - |\hat{C}(X_i, \alpha'_k) \cap Y_i|/|Y_i| | g(X_i)=k] \leq \alpha.
\end{equation}
This approach improves conditional risk control by adapting the calibration thresholds to images with different levels of predicted probability mass, which is particularly important in segmentation tasks where the model's confidence can vary significantly across different images. Images with similar total predicted probabilities tend to exhibit similar risk characteristics, allowing for more tailored risk control.

\section{Experiments}

In our experiment, we evaluate our proposed conditional conformal risk control approaches, including CRA, CCRA and CCRA-S, against the baseline CRC on polyp segmentation in colonoscopy images. We use the same datasets (ETIS, CVC-ClinicDB, CVC-ColonDB, EndoScene, and Kvasir) and model setup (PraNet) as previous studies \cite{angelopoulos2024conformal, mossina2025conformal}. The model was trained on 1000 images, with the remaining 798 images used for evaluation. We use 70\% for calibration and 30\% for testing, repeating this split over 100 trials to thoroughly assess whether our conditional approaches provide improved uncertainty quantification compared to standard CRC.

\begin{table}[ht]
    \centering
    \caption{Marginal Coverage and Coverage Gap Results for Different Significance Levels and Methods}
    \label{tab:coverage_results}
    \begin{tabular}{cccc}
        \toprule
        $\alpha$ & Method & Marginal Coverage & Coverage Gap \\
        \midrule
        \multirow{4}{*}{0.05} & CRC & 0.953 (0.149) & 0.078 (0.127) \\
        & CRA & 0.953 (0.141) & 0.076 (0.119) \\
        & CCRA & 0.954 (0.111) & 0.066 (0.090) \\
        & CCRA-S & 0.963 (0.106) & \textbf{0.062 (0.087)} \\
        \midrule
        \multirow{4}{*}{0.10} & CRC & 0.900 (0.251) & 0.157 (0.196) \\
        & CRA & 0.901 (0.211) & 0.143 (0.155) \\
        & CCRA & 0.901 (0.161) & 0.113 (0.115) \\
        & CCRA-S & 0.908 (0.158) & \textbf{0.101 (0.122)} \\
        \midrule
        \multirow{4}{*}{0.20} & CRC & 0.801 (0.346) & 0.264 (0.224) \\
        & CRA & 0.802 (0.294) & 0.243 (0.165) \\
        & CCRA & 0.801 (0.208) & 0.158 (0.135) \\
        & CCRA-S & 0.809 (0.200) & \textbf{0.138 (0.146)} \\
        \bottomrule
    \end{tabular}
\end{table}

\begin{figure}[tb]
    \centering
    \begin{minipage}[t]{0.4\linewidth}
        \centering
        \includegraphics[width=\linewidth]{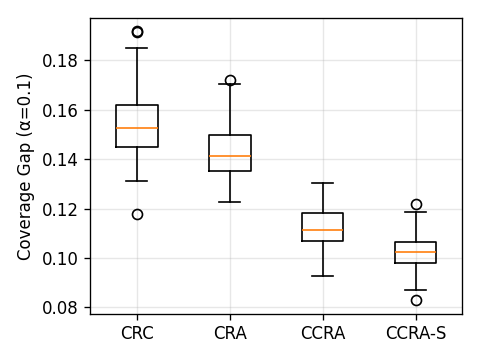}
    \end{minipage}%
    \begin{minipage}[t]{0.6\linewidth}
        \centering
        \includegraphics[width=\linewidth]{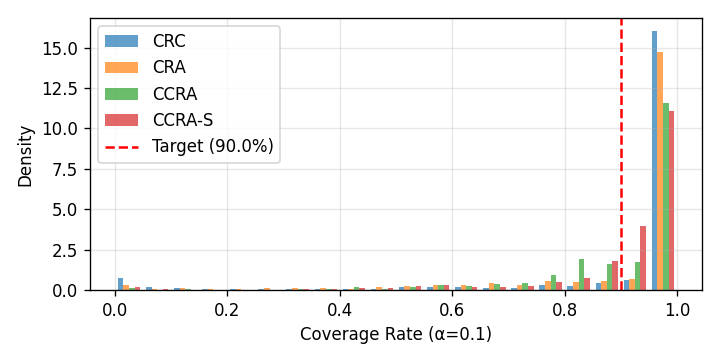}
    \end{minipage}
    \caption{Left: Coverage gap distribution for all methods at significance level $\alpha = 0.1$ (target coverage 90\%). The boxplots show the distribution of absolute differences between achieved coverage and target coverage across 100 experimental trials. Lower values indicate better conformity to the target coverage. Our proposed CCRA and CCRA-S methods demonstrate smaller coverage gaps and less variance compared to the baseline CRC, indicating more reliable uncertainty quantification. Right: Coverage distribution for all methods at significance level $\alpha = 0.1$. The histogram shows the density of coverage values achieved across all test samples in all experimental trials. The vertical red dashed line indicates the target coverage of 90\%. Our proposed methods, especially CCRA-S, produce distributions more tightly centered around the target coverage, demonstrating better calibration compared to the baseline CRC method, which shows a wider, more dispersed distribution.}
    \label{fig:coverage_combined}
\end{figure}

For each test sample, we calculate the actual coverage as the proportion of true labels captured by the prediction region. The coverage gap is then defined as the $l_1$ distance between the achieved coverage and the target coverage $(1-\alpha)$. A smaller coverage gap indicates better conditional risk control.

Our experimental results demonstrate that the proposed conditional approaches outperform the standard CRC method in terms of the coverage gap. Figure~\ref{fig:coverage_combined} shows that CCRA and CCRA-S achieve smaller coverage gaps with less variance, indicating more reliable uncertainty quantification. The coverage distributions in the same figure further confirm this finding, with our methods producing distributions more tightly centered around the target coverage. 

When evaluating performance across different $\alpha$ values (Figure~\ref{fig:coverage_gap_vs_alpha}), our conditional approaches consistently maintain smaller coverage gaps throughout the entire range, with CCRA-S demonstrating the most robust performance. This suggests that our methods provide more reliable uncertainty quantification regardless of the desired confidence level.

The qualitative comparison in Figure~\ref{fig:visual_comparison} demonstrates CCRA-S's ability to generate more informative prediction regions. We fix $\alpha=0.1$ (target coverage 90\%) and randomly select test images where CRC's coverage spans a wide range (from 85\% to 99.5\%). For each selected image, we compare the prediction sets generated by both methods. While CRC's coverage fluctuates substantially across different polyp images, CCRA-S consistently produces prediction regions closer to the target 90\% coverage level on the same images. This stable performance across diverse clinical cases makes CCRA-S particularly valuable for medical applications where consistent and well-calibrated uncertainty quantification is crucial for reliable decision support.

Notably, the stratification approach in CCRA-S allows for adaptive thresholds based on predicted polyp size, addressing a key limitation of standard conformal prediction methods that apply a single threshold across heterogeneous data. This adaptation enables CCRA-S to provide more reliable uncertainty estimates for both small and large polyps, potentially improving clinical decision support.

\begin{remark}
    It is important to note that the coverage gap cannot be reduced to zero in practical settings for two key reasons. First, the estimated probabilities $\hat{p}$ are inherently inaccurate, introducing a baseline error that propagates through the prediction pipeline. Second, the ratio $|\hat{C}(X_i)\cap Y_i|/|Y_i|$ is a random variable in our setting; its $l_1$ distance from the target coverage $(1-\alpha)$ has a positive expected value even under ideal conditions. These fundamental limitations establish a theoretical lower bound on the achievable coverage gap, highlighting the significance of the improvements demonstrated by our proposed methods.
\end{remark}

\begin{figure}[tb]
    \centering
    \includegraphics[width=0.75\textwidth]{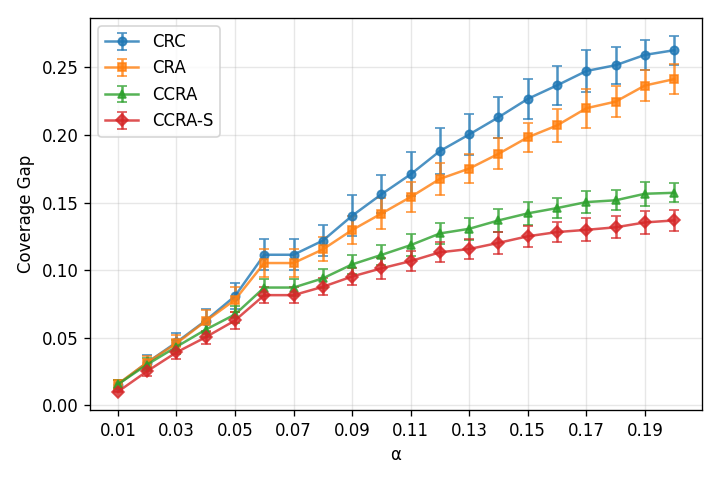}
    \caption{Coverage gap versus significance level $\alpha$ for all methods. The plot shows how the mean coverage gap (with standard deviation error bars) varies with different values of $\alpha$ from 0.01 to 0.20. Our proposed methods consistently achieve smaller coverage gaps than the baseline CRC across the entire range of significance levels, with CCRA-S demonstrating the optimal performance. This indicates that our conditional approaches provide more reliable uncertainty quantification regardless of the desired confidence level.}
    \label{fig:coverage_gap_vs_alpha}
\end{figure}

\begin{figure}[tb]
    \centering
    \includegraphics[width=\textwidth]{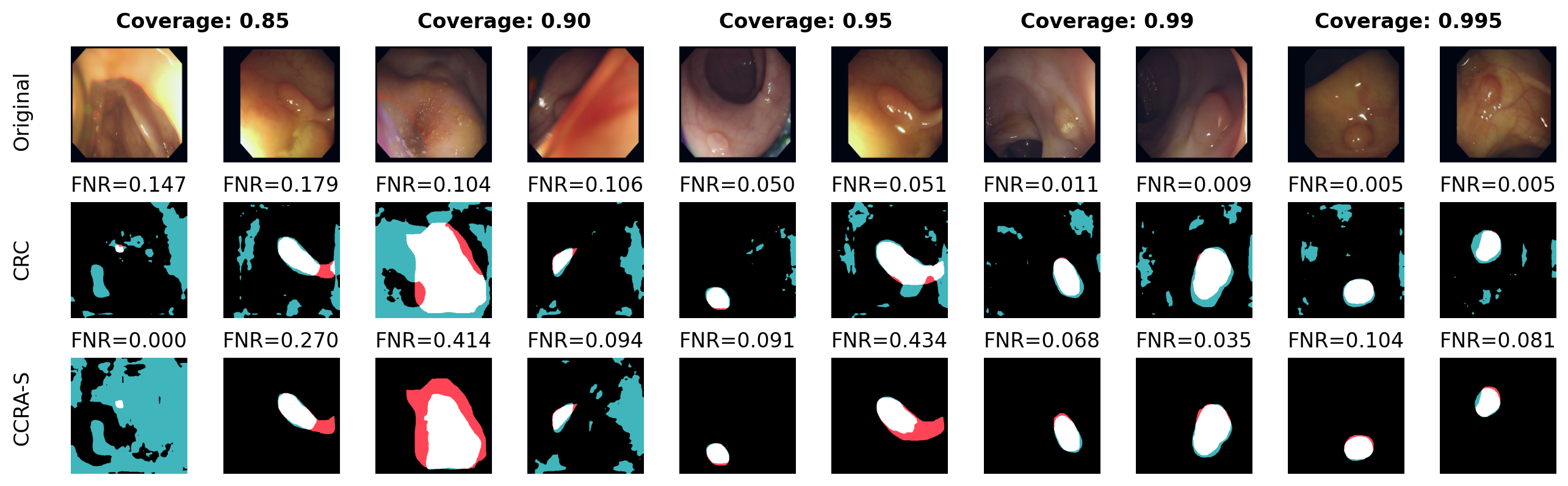}
    \caption{Qualitative comparison of CRC and CCRA-S prediction sets at significance level $\alpha = 0.1$. Each column pair shows examples at specific CRC coverage levels (from left to right: 85\%, 90\%, 95\%, 99\%, and 99.5\%), while CCRA-S coverage remains approximately 90\% across all samples. The top row displays original polyp images, the middle row shows CRC prediction sets, and the bottom row presents CCRA-S prediction sets. White pixels indicate true positives, red pixels show false negatives, and teal pixels represent false positives. FNR values (False Negative Rate = 1 - coverage) are shown for each prediction. Note how CCRA-S maintains more consistent coverage and false negatives across different images, while CRC's performance varies substantially depending on image characteristics, demonstrating the advantage of our conditional risk control approaches.}
    \label{fig:visual_comparison}
\end{figure}

\section{Conclusion}

In this paper, we introduced Conformal Risk Adaptation (CRA), a novel approach for improving conditional risk control in image segmentation tasks. CRA addresses a critical limitation in standard conformal methods where some images experience much higher false negative rates than others. We further enhanced CRA through a specialized probability calibration framework, resulting in Calibrated Conformal Risk Adaptation (CCRA) and a stratified variant (CCRA-S) that partitions images based on their total predicted probability, enabling group-specific thresholds that deliver more consistent risk control across diverse images.

The theoretical foundation of our work establishes a fundamental connection between conformal risk control and conformal prediction through a weighted quantile approach. This connection is applicable to any conformal risk control score function and provides the mathematical framework that guarantees valid risk bounds. 

Our methodological contributions bridge the gap between theoretical conformal methods and practical computer vision applications, offering a principled approach to uncertainty quantification in segmentation. The resulting framework provides distribution-free guarantees on false negative rates with improved conditional risk control, making it particularly valuable for medical imaging applications where both overall performance and per-patient reliability are essential for clinical decision-making.

\end{document}